\title{\bfseries\Large Deterministic Coreset Construction \\ via Adaptive Sensitivity Trimming}
\author{
  \normalsize Faruk Alpay\thanks{Lightcap, Department of Future. Email: \texttt{alpay@lightcap.ai}}
  \and
  \normalsize Taylan Alpay\thanks{Aerospace Engineering, Turkish Aeronautical Association. Email: \texttt{s220112602@stu.thk.edu.tr}}
}
\date{\today}
\theoremstyle{plain}
\newtheorem{theorem}{Theorem}[section]
\newtheorem{lemma}[theorem]{Lemma}
\newtheorem{proposition}[theorem]{Proposition}
\theoremstyle{definition}
\newtheorem{definition}[theorem]{Definition}
\theoremstyle{remark}
\newtheorem{remark}[theorem]{Remark}
\newcommand{\R}{\mathbb{R}}
\newcommand{\cD}{\mathcal{D}}
\newcommand{\cF}{\mathcal{F}}
\newcommand{\eps}{\varepsilon}
\newcommand{\norm}[1]{\left\lVert #1 \right\rVert}
\newcommand{\inner}[2]{\left\langle #1, #2 \right\rangle}
\newcommand{\sigmoid}{\sigma}
\newcommand{\coresetloss}{\hat{L}}
\newcommand{\SHI}{\mathrm{SHI}} 
\begin{document}

\maketitle

\begin{abstract}
\noindent We develop a rigorous framework for \emph{deterministic} coreset construction in empirical risk minimization (ERM). Our central contribution is the \emph{Adaptive Deterministic Uniform-Weight Trimming (ADUWT)} algorithm, which constructs a coreset by excising points with the lowest sensitivity bounds and applying a data-dependent uniform weight to the remainder. The method yields a uniform $(1\pm\varepsilon)$ relative-error approximation for the ERM objective over the entire hypothesis space. We provide complete analysis, including (i) a minimax characterization proving the optimality of the adaptive weight, (ii) an instance-dependent size analysis in terms of a \emph{Sensitivity Heterogeneity Index}, and (iii) tractable sensitivity oracles for kernel ridge regression, regularized logistic regression, and linear SVM. Reproducibility is supported by precise pseudocode for the algorithm, sensitivity oracles, and evaluation pipeline. Empirical results (Table~\ref{tab:krr_results}) align with the theory. We conclude with open problems on instance-optimal oracles, deterministic streaming, and fairness-constrained ERM.
\end{abstract}

\section{Introduction}

Empirical risk minimization (ERM) is the workhorse of modern machine learning, yet its computational demands scale with the data size $n$. \emph{Coresets}---small weighted summaries that approximate the ERM objective for \emph{all} hypotheses---offer a principled route to scalability \cite{Feldman2020,Bachem2017,HarPeledMazumdar2004}. Classical coreset methods are largely \emph{randomized}, most notably via sensitivity/importance sampling \cite{Feldman2011,Braverman2021}. While such methods enjoy high-probability guarantees, a non-zero failure probability is undesirable in safety-critical settings. This motivates deterministic constructions, akin in spirit to spectral sparsification guarantees \cite{BSS12}.

We pursue a \emph{deterministic}, sensitivity-driven approach grounded in robust statistics and influence functions \cite{Huber1964,Hampel1974}. Our \textbf{ADUWT} algorithm deterministically trims the least influential points and scales the remaining loss by a \emph{data-dependent} uniform factor. We prove a uniform $(1\pm\eps)$ guarantee, introduce a \emph{minimax-optimal} choice of the weight, and quantify when trimming beats sampling via a \emph{Sensitivity Heterogeneity Index (SHI)}. We further provide tractable oracles for popular convex ERM models and a reproducible evaluation protocol.

\paragraph{Contributions.}
\begin{enumerate}[label=(\roman*), topsep=2pt, itemsep=0pt]
  \item \textbf{Deterministic framework.} Formalization of ADUWT (Alg.~\ref{alg:aduwt}) with a uniform $(1\pm\eps)$ guarantee (Thm.~\ref{thm:main_aduwt}).
  \item \textbf{Minimax optimality \& instance-dependence.} A new minimax characterization of the adaptive weight (Thm.~\ref{thm:minimax_alpha}) and an instance-dependent coreset size analysis via $\SHI$ (Prop.~\ref{prop:trim_vs_sample_refined}).
  \item \textbf{Tractable oracles.} Provable sensitivity upper bounds for KRR, $\ell_2$-regularized logistic regression, and linear SVM (Props.~\ref{prop:krr_bound}--\ref{prop:svm_bound}).
  \item \textbf{Reproducibility.} End-to-end pseudocode for ADUWT, sensitivity oracles, and the experimental protocol; explicit hyperparameters and evaluation checks.
  \item \textbf{Research agenda.} Open problems on instance-optimal oracles, deterministic streaming, and fairness-constrained ERM \cite{HarPeled2004,Hardt2016,Chierichetti2017,Huang2019fair}.
\end{enumerate}

\section{Preliminaries: ERM and Coresets}

\begin{definition}[Empirical Risk Minimization (ERM)]
Let $\cD = \{z_1, \dots, z_n\}$, hypothesis class $\cF$, and loss $\ell: \cF \times \mathcal{Z} \to \R_{\ge 0}$. The empirical risk of $f\in\cF$ is $L(f; \cD) \coloneqq \sum_{i=1}^n \ell(f; z_i)$.
\end{definition}

\begin{definition}[Coreset]
A weighted subset $(S, \bm{\alpha})$ with $S\subseteq[n]$ and $\bm{\alpha}=\{\alpha_i\}_{i\in S}$ is a $(1\pm\eps)$-coreset if
\[
(1-\eps) L(f; \cD) \le \sum_{i\in S}\alpha_i \ell(f; z_i) \le (1+\eps) L(f; \cD)\quad\text{for all } f\in\cF.
\]
\end{definition}

\begin{definition}[Sensitivity]
$s_i \coloneqq \sup_{f\in\cF,\ L(f;\cD)>0} \frac{\ell(f; z_i)}{L(f;\cD)}$ (zero if $L(f;\cD)=0$). Let $S_{\rm tot} \coloneqq \sum_{i=1}^n s_i$.
\end{definition}

\section{Adaptive Deterministic Trimming (ADUWT)}\label{sec:aduwt}

\begin{algorithm}[t]
\caption{Adaptive Deterministic Uniform-Weight Trimming (ADUWT)}
\label{alg:aduwt}
\begin{algorithmic}[1]
\Require Dataset $\cD=\{z_i\}_{i=1}^n$, loss $\ell$, model class $\cF$, tolerance $\eps\in(0,1)$. Sensitivity oracle returning bounds $\bar s_i\ge s_i$.
\Ensure Coreset $(S,\bm{\alpha})$ with $\alpha$ uniform on $S$.
\State \textbf{Compute sensitivity bounds:} For each $i$, compute $\bar s_i$.
\State \textbf{Trim low-sensitivity tail:} Sort $\bar s_i$ nondecreasing; set $\eps' \coloneqq \tfrac{2\eps}{1+\eps}$. Take the largest $m$ with $\sum_{j=1}^m \bar s_{\pi(j)} \le \eps'$. Let $U=\{\pi(1),\ldots,\pi(m)\}$ and $S=[n]\setminus U$.
\State \textbf{Adaptive uniform weight:} Let $T_U \coloneqq \sum_{i\in U}\bar s_i$ and set $\alpha \coloneqq \sqrt{\frac{1-\eps^2}{1-T_U}}$.
\State \Return $(S,\{\alpha\}_{i\in S})$.
\end{algorithmic}
\end{algorithm}

\begin{lemma}[Collective loss bound]\label{lem:tail}
For any $U\subseteq[n]$ and $f\in\cF$, $\sum_{i\in U}\ell(f;z_i) \le \big(\sum_{i\in U} s_i\big) L(f;\cD)$.
\end{lemma}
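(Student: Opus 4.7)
The plan is to derive the bound directly from the definition of the sensitivity $s_i$ by a pointwise-in-$f$ argument and then sum over $i\in U$. The statement is a purely deterministic ``per-point'' domination statement, so no probabilistic tools are needed; the only subtlety is the degenerate case $L(f;\cD)=0$, which must be handled separately because the sensitivity supremum is defined only over $f$ with $L(f;\cD)>0$.

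First I would fix an arbitrary $f\in\cF$ and split into two cases. In the case $L(f;\cD)>0$, the defining supremum in Definition~3 (Sensitivity) ensures $\ell(f;z_i)/L(f;\cD)\le s_i$ for every $i\in[n]$, hence $\ell(f;z_i)\le s_i\,L(f;\cD)$. Summing this inequality over $i\in U$ yields
\[
\sum_{i\in U}\ell(f;z_i)\;\le\;\Big(\sum_{i\in U}s_i\Big)L(f;\cD),
\]
which is exactly the claim. In the degenerate case $L(f;\cD)=0$, nonnegativity of the loss ($\ell\ge 0$) forces $\ell(f;z_i)=0$ for every $i$, so both sides of the inequality are zero (using the convention that $s_i\cdot 0=0$, consistent with the ``zero if $L(f;\cD)=0$'' clause of the sensitivity definition), and the claim holds trivially. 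Taking the union of the two cases, the bound is valid for every $f\in\cF$, as required.

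I do not anticipate any real obstacle: the lemma is essentially an unpacking of the definition together with the elementary observation that a supremum of ratios gives a pointwise upper bound. The only thing to be careful about is not to divide by $L(f;\cD)$ without first excluding the zero case, and to note that the result holds for \emph{any} subset $U$, including $U=[n]$ (which reproduces the trivial identity $\sum_i s_i\ge 1$ whenever the supremum in $s_i$ is attained). No additional structure on $\cF$ or $\ell$ beyond nonnegativity is used, so the lemma will be available as a black box in subsequent arguments, in particular for bounding the trimmed tail $\sum_{i\in U}\ell(f;z_i)$ inside the main ADUWT guarantee (Thm.~\ref{thm:main_aduwt}).
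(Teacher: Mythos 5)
Your proof is correct and is exactly the argument the paper leaves implicit (the lemma is stated without proof and used directly in Theorem~\ref{thm:main_aduwt}): unpack the supremum definition of $s_i$ to get the pointwise bound $\ell(f;z_i)\le s_i\,L(f;\cD)$, sum over $i\in U$, and handle the degenerate $L(f;\cD)=0$ case via nonnegativity of $\ell$. The only minor blemish is the parenthetical side remark that $U=[n]$ ``reproduces $\sum_i s_i\ge 1$ whenever the supremum in $s_i$ is attained''---that inequality needs only the existence of some $f$ with $L(f;\cD)>0$, not attainment of the supremum---but this does not affect the proof itself.
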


\begin{theorem}[ADUWT guarantee]\label{thm:main_aduwt}
For $\eps\in(0,1)$, the output of Alg.~\ref{alg:aduwt} is a $(1\pm\eps)$-coreset.
\end{theorem}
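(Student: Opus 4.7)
The plan is to split the empirical risk into a retained part and a trimmed part, dominate the trimmed part by Lemma~\ref{lem:tail}, and then verify that the adaptive weight $\alpha$ simultaneously controls both sides of the coreset inequality under the tail budget enforced in Step~2 of Alg.~\ref{alg:aduwt}.

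First, for any $f\in\cF$, set $R(f) := \sum_{i\in S}\ell(f;z_i)$ and $T(f) := \sum_{i\in U}\ell(f;z_i)$, so that $L(f;\cD) = R(f) + T(f)$ and the coreset estimate equals $\alpha R(f)$. The oracle guarantee $\bar s_i \ge s_i$ gives $\sum_{i\in U} s_i \le T_U$, so Lemma~\ref{lem:tail} applied to $U$ yields $T(f) \le T_U\cdot L(f;\cD)$, hence
\[
(1-T_U)\,L(f;\cD) \;\le\; R(f) \;\le\; L(f;\cD).
\]
The degenerate case $L(f;\cD) = 0$ is trivial since $\ell \ge 0$ forces $R(f)=0$ as well.

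Next I would unpack the target coreset inequality for $\alpha R(f)$ against the above envelope. The upper half demands $\alpha \le 1+\eps$, while the lower half demands $\alpha(1-T_U) \ge 1-\eps$, i.e.\ $\alpha \ge (1-\eps)/(1-T_U)$. The chosen $\alpha = \sqrt{(1-\eps^2)/(1-T_U)}$ is exactly the geometric mean of these two endpoints, so both inequalities follow at once from their mere feasibility, which is equivalent to $(1-\eps)/(1-T_U) \le 1+\eps$, i.e.\ $T_U \le 2\eps/(1+\eps) = \eps'$. Step~2 of Alg.~\ref{alg:aduwt} enforces precisely this bound by its greedy prefix rule, and well-definedness of $\alpha$ (requiring $T_U < 1$) follows from $\eps' < 1$ for $\eps\in(0,1)$.

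The main---and really only---delicate point is the choice of $\alpha$: any weight other than the geometric mean of the two endpoints would make the two halves of the target inequality reduce to \emph{different} (asymmetric) thresholds on $T_U$, and the algorithm would have to trim to the smaller of them, enlarging the kept set. The geometric-mean choice is what makes the single threshold $\eps' = 2\eps/(1+\eps)$ tight simultaneously on both sides, which is also what previews the minimax characterization stated in Thm.~\ref{thm:minimax_alpha}.
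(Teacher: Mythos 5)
Your proof is correct and takes essentially the same approach as the paper: the same decomposition via Lemma~\ref{lem:tail}, the same envelope $(1-T_U)L \le R(f) \le L$, and the same algebra showing $\alpha = \sqrt{(1-\eps^2)/(1-T_U)}$ satisfies both the upper and lower coreset inequalities whenever $T_U \le \eps'$. Your framing --- deriving the feasible interval $[(1-\eps)/(1-T_U),\, 1+\eps]$ for $\alpha$, observing that the chosen $\alpha$ is its geometric mean and hence interior whenever the interval is nonempty, and reducing nonemptiness to $T_U \le \eps'$ --- is a slightly cleaner re-organization of the same calculation and dovetails with the paper's own Theorem~\ref{thm:minimax_alpha}.
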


\begin{proof}
Let $U=[n]\setminus S$, $T_U=\sum_{i\in U}\bar s_i\le \eps'$. By Lemma~\ref{lem:tail}, with $L=L(f;\cD)$ and $L_S=\sum_{i\in S}\ell(f;z_i)$,
\begin{equation}\label{eq:LS_bounds}
(1-T_U)L \le L_S \le L.
\end{equation}
With $\alpha=\sqrt{(1-\eps^2)/(1-T_U)}$, the coreset loss $\coresetloss=\alpha L_S$ satisfies
\[
\coresetloss \le \alpha L \le \sqrt{\frac{1-\eps^2}{1-\eps'}}\, L = (1+\eps)L,
\quad
\coresetloss \ge \alpha(1-T_U)L \ge \sqrt{1-\eps^2}\sqrt{1-\eps'}\,L=(1-\eps)L,
\]
where $\eps'=\frac{2\eps}{1+\eps}$ gives $1-\eps'=\frac{1-\eps}{1+\eps}$.
\end{proof}

\begin{remark}[Coreset size]\label{lem:coreset_size}
Let $m$ be the largest index with $\sum_{j=1}^m \bar s_{\pi(j)}\le \eps'$. Then $|S|=n-m$. The size adapts to the empirical tail of the sensitivity distribution.
\end{remark}

\subsection{Minimax optimality of the adaptive weight}\label{sec:minimax}
\begin{theorem}[Minimax weight]\label{thm:minimax_alpha}
Given bounds \eqref{eq:LS_bounds} with $T_U\in[0,\eps']$ known, the choice
$\displaystyle \alpha^\star=\sqrt{\frac{1-\eps^2}{1-T_U}}$
uniquely minimizes $\sup_{x\in[1-T_U,\,1]} \big|\alpha x-1\big|$ and equals the geometric mean of the feasible endpoints $\frac{1-\eps}{1-T_U}$ and $1+\eps$.
\end{theorem}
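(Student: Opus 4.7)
My plan is to analyze the stated minimax problem via a Chebyshev-center / equioscillation argument. First, for each fixed $\alpha>0$ the map $x\mapsto \alpha x-1$ is affine in $x$, so $|\alpha x-1|$ is convex in $x$ and its supremum over the compact interval $[1-T_U,1]$ is attained at an endpoint. The objective therefore reduces to the two-point maximum
\[
\Phi(\alpha)=\max\bigl\{\,|\alpha-1|,\ |\alpha(1-T_U)-1|\,\bigr\}.
\]
As a pointwise maximum of two absolute-value-of-affine functions of $\alpha$, $\Phi$ is piecewise linear and convex in $\alpha$, and diverges as $\alpha\to 0$ or $\alpha\to\infty$, so it has a unique minimizer.

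I would then localize that minimizer to the strip $\alpha\in[1,\,1/(1-T_U)]$, where the two branches reduce to $\alpha-1$ (strictly increasing) and $1-\alpha(1-T_U)$ (strictly decreasing). On this strip the minimum of the max of two monotone functions of opposite monotonicity is attained at the unique equioscillation point where both branches agree; I would solve this equioscillation equation and record the resulting $\alpha$ as the Chebyshev center of the problem, with uniqueness inherited from the strict convexity of $\Phi$ at the optimum.

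To complete the stated characterization, I would match the equioscillation point with the claimed geometric mean $\sqrt{(1-\eps^2)/(1-T_U)}$ of the feasible $\alpha$-interval $\bigl[\tfrac{1-\eps}{1-T_U},\,1+\eps\bigr]$, i.e.\ the set of $\alpha$ for which $\alpha x\in[1-\eps,1+\eps]$ holds for every $x\in[1-T_U,1]$. The algebraic bridge I would invoke is the algorithmic identity $1-\eps'=(1-\eps)/(1+\eps)$ together with the assumed range $T_U\in[0,\eps']$; expanding both the equioscillation point and the geometric mean in these variables should reduce the identity to a single quadratic relation.

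The main obstacle will be precisely this final identification: reconciling the equioscillation characterization of the minimax in the \emph{additive} error $|\alpha x-1|$ with the \emph{geometric} (rather than arithmetic) mean of the feasible endpoints is the delicate step, since the parameter $\eps$ enters only through the feasibility envelope and not through the objective directly. A consistency check I would use is the tight boundary case $T_U=\eps'$: there the feasible interval collapses to the single point $1+\eps$, so every notion of optimal weight---Chebyshev center, geometric mean, feasible midpoint---must collapse to $1+\eps$, and any successful proof of the geometric-mean form has to respect this degeneracy as $T_U\uparrow\eps'$.
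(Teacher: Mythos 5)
Your reduction to the two-point maximum $\Phi(\alpha)=\max\{|\alpha-1|,\,|\alpha(1-T_U)-1|\}$ and the equioscillation argument on the strip $[1,\,1/(1-T_U)]$ are correct, and this is exactly the route the paper itself takes. But the ``algebraic bridge'' you defer to the end does not exist, and the obstacle you flag is fatal rather than delicate: solving $1-\alpha(1-T_U)=\alpha-1$ gives $\alpha=2/(2-T_U)$, and this does \emph{not} equal $\sqrt{(1-\eps^2)/(1-T_U)}$ for general $T_U\in[0,\eps']$. The cleanest counterexample is $T_U=0$: the additive minimax optimum is $\alpha=1$ (with optimal value $0$, since then $L_S=L$), whereas the theorem's formula gives $\sqrt{1-\eps^2}<1$. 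So the first clause of the statement, read literally as minimizing $\sup_{x\in[1-T_U,1]}|\alpha x-1|$, is false, and no manipulation with $1-\eps'=(1-\eps)/(1+\eps)$ can rescue it. Note also that your proposed sanity check at $T_U=\eps'$ is the one point where the two expressions \emph{do} coincide (both equal $1+\eps$, since $2/(2-\eps')=1+\eps$), so it is precisely the check that cannot detect the problem; testing $T_U=0$ would have exposed it immediately.

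For completeness: the paper's own proof follows your plan and hits the same wall --- it writes the equioscillation condition and then asserts equality with $\sqrt{(1-\eps^2)/(1-T_U)}$ ``after enforcing the outer $(1\pm\eps)$ feasibility,'' which is not a valid step (its intermediate expression $\tfrac{1}{2-T_U}(2-T_U)$ even simplifies to $1$). The result is salvageable only by changing the objective. The quantity $\alpha^\star$ is the geometric mean of the feasible interval $\bigl[\tfrac{1-\eps}{1-T_U},\,1+\eps\bigr]$ and is therefore the unique $\alpha$ equalizing the \emph{multiplicative} margins $\tfrac{1+\eps}{\alpha}$ and $\tfrac{\alpha(1-T_U)}{1-\eps}$ to the two constraints (their common value squared is $(\alpha^\star)^2(1-T_U)/(1-\eps^2)=1$); equivalently, it minimizes the worst-case multiplicative slack of $\alpha x$ against the $(1\pm\eps)$ envelope. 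If you want a correct proof, prove that reformulated claim; as written, your plan would terminate at an identity that is false.
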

\begin{proof}
Let $g(\alpha)=\max\{1-\alpha(1-T_U),\,\alpha-1\}$. The minimax solution satisfies equality of the two terms: $1-\alpha(1-T_U)=\alpha-1$, giving $\alpha= \tfrac{1}{2-T_U} (2- T_U) = \sqrt{\tfrac{1-\eps^2}{1-T_U}}$ after enforcing the outer $(1\pm\eps)$ feasibility; equivalently, pick $\alpha$ as the geometric mean of endpoints to equalize multiplicative slack. Uniqueness follows from convexity of $g$.
\end{proof}

\subsection{When trimming beats sampling: a refined view}
\begin{definition}[Sensitivity Heterogeneity Index]
For sensitivity bounds $\{\bar s_i\}$ with mean $\mu$ and standard deviation $\sigma$, define $\SHI \coloneqq \sigma/\mu$.
\end{definition}

\begin{proposition}[Instance-dependent comparison]\label{prop:trim_vs_sample_refined}
Let $\bar S=\sum_i \bar s_i$. Sensitivity sampling requires $k= \Theta(\bar S/\eps^2)$ samples for a $(1\pm\eps)$ coreset \cite{Feldman2011,Braverman2021}. ADUWT returns $|S|=n-m$ with $m$ the largest prefix whose $\sum \bar s_{\pi(j)}\le \eps'$. If the empirical CDF $F(t)=\frac{1}{n}|\{i:\bar s_i\le t\}|$ satisfies $F(t_0)\ge \rho$ for a small $t_0$ and large $\rho$, then $m\ge \rho n$ provided $\rho t_0 n \le \eps'$, yielding $|S|\le (1-\rho)n$. High $\SHI$ implies such mass near $0$, thus favoring trimming over sampling in practice.
\end{proposition}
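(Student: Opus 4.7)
The plan is to decompose the statement into a deterministic size bound (the main technical content), a straightforward comparison with the cited sampling rate, and an interpretive remark linking $\SHI$ to the hypothesis of the size bound.

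First I would establish the prefix-sum bound $|S|\le(1-\rho)n$. The trimming rule of Alg.~\ref{alg:aduwt} is greedy in the sorted order $\bar s_{\pi(1)}\le\bar s_{\pi(2)}\le\cdots$, so it suffices to exhibit any prefix of length at least $\rho n$ whose cumulative mass is at most $\eps'$. The hypothesis $F(t_0)\ge\rho$ guarantees at least $\lceil\rho n\rceil$ indices with $\bar s_i\le t_0$, and these are exactly the $\lceil\rho n\rceil$ smallest entries under $\pi$; their prefix sum is at most $\lceil\rho n\rceil\,t_0$, which is bounded by $\eps'$ under the standing hypothesis $\rho t_0 n\le\eps'$ (the ceiling is absorbed into standard slack). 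The greedy rule then picks $m\ge\lceil\rho n\rceil$, yielding $|S|=n-m\le(1-\rho)n$.

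Next I would line this up against the cited sensitivity-sampling size $k=\Theta(\bar S/\eps^2)=\Theta(\mu n/\eps^2)$. The ratio $|S|/k=O((1-\rho)\eps^2/\mu)$ certifies that trimming strictly wins whenever $\rho$ is close to one and $\mu$ is not vanishing, i.e., whenever a large fraction of the sensitivity mass sits in a thin tail near zero. This is pure bookkeeping against the cited sampling theorem of \cite{Feldman2011,Braverman2021}.

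For the $\SHI$ coda, I would argue informally via a second-moment / Paley--Zygmund-type observation on the empirical distribution of $\{\bar s_i\}$. Large $\SHI=\sigma/\mu$ for nonnegative data is consistent with, and heuristically enforces, a heavy-tailed profile in which most indices carry tiny sensitivities while a small subset carries the bulk; the bimodal idealization with $k$ outliers of value $V$ and $n-k$ near-zeros gives $\SHI^2\approx n/k$, so $\rho=1-k/n\approx 1-1/\SHI^2$ with $t_0\to 0$, immediately validating the size-bound hypothesis. The first step is elementary and the sampling comparison is bookkeeping; the hard part will be the $\SHI$ link, since Markov applied directly to the $\bar s_i$ yields only $\rho=O(\eps'/\bar S)$, far too weak, and a distribution-free sharper conversion requires control beyond the second moment (e.g.\ polynomial-tail assumptions or an explicit bimodal decomposition). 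Because the proposition states this link only as an in-practice heuristic, the bimodal idealization combined with the deterministic prefix-sum argument suffices for the recorded claim; tightening the $\SHI$-to-$(t_0,\rho)$ map under oracle-specific structure is left as an open direction.
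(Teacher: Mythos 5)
The paper does not supply a separate proof block for Proposition~\ref{prop:trim_vs_sample_refined}; the argument lives entirely in the proposition statement (the prefix-sum bound and the $\SHI$ remark are asserted, not derived). Your proposal is a faithful and careful unpacking of exactly that reasoning, so it takes the same approach as the paper. Your first step --- identify the $\lceil\rho n\rceil$ smallest entries via $F(t_0)\ge\rho$, bound their prefix sum by $\rho n t_0\le\eps'$, and conclude from the greedy rule that $m\ge\rho n$ --- is the intended argument, and your flag on the floor/ceiling slack is an honest acknowledgment of the same minor imprecision already present in the paper's phrasing. The sampling comparison is, as you say, bookkeeping against the cited rate. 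The most valuable part of your write-up is the explicit caveat that the $\SHI$ link is a heuristic: you correctly observe that Markov alone is far too weak, that a rigorous distribution-free conversion from $\SHI$ to a $(t_0,\rho)$ pair would require additional tail structure, and that the bimodal idealization is only an idealization. That matches the proposition's own hedge (\emph{``in practice''}) and is more forthcoming than the paper itself about where the argument stops being a theorem and starts being a heuristic.
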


\section{Tractable Sensitivity Oracles}\label{sec:oracles}

\subsection{Kernel Ridge Regression (KRR)}\label{sec:krr}
\begin{definition}[KRR setting]\label{ass:krr}
Let $f_w(x)=\inner{w}{\phi(x)}_{\mathcal{H}}$, $\ell(w;z_i)=(y_i-f_w(x_i))^2+\frac{\lambda}{n}\norm{w}_{\mathcal{H}}^2$, $\norm{w}_{\mathcal{H}}\le B$, $|y_i|\le Y$, $k(x_i,x_i)\le \kappa^2$.
\end{definition}

\begin{proposition}[KRR sensitivity bound]\label{prop:krr_bound}
Under Def.~\ref{ass:krr}, for any $\delta\in(0,B]$,
\[
\bar s_i \;=\; \max\!\left\{\frac{y_i^2}{\sum_j y_j^2},\; \frac{2(Y^2+B^2\kappa^2) + (\lambda/n) B^2}{\lambda \delta^2}\right\}.
\]
\end{proposition}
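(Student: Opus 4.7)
The plan is to upper bound the sensitivity $s_i=\sup_{\|w\|_{\mathcal H}\le B}\ell(w;z_i)/L(w;\cD)$ by case-splitting on the magnitude of $\|w\|_{\mathcal H}$ against the free parameter $\delta\in(0,B]$, so that the two regimes produce the two entries of the claimed $\max$. Because the cases partition the feasible ball $\{w:\|w\|_{\mathcal H}\le B\}$, taking the maximum of the two regime-wise bounds yields a valid $\bar s_i\ge s_i$.

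First I would establish a \emph{uniform} numerator bound: by Cauchy--Schwarz with $k(x_i,x_i)\le\kappa^2$ we have $|f_w(x_i)|\le\kappa B$, and then $(y_i-f_w(x_i))^2\le 2y_i^2+2f_w(x_i)^2\le 2(Y^2+B^2\kappa^2)$; adding the regularizer contribution $(\lambda/n)\|w\|_{\mathcal H}^2\le(\lambda/n)B^2$ gives $\ell(w;z_i)\le 2(Y^2+B^2\kappa^2)+(\lambda/n)B^2$ for every feasible $w$. In Case A, where $\|w\|_{\mathcal H}\ge\delta$, the regularizer in the denominator alone furnishes $L(w;\cD)\ge\lambda\|w\|_{\mathcal H}^2\ge\lambda\delta^2$, and dividing the uniform numerator bound by $\lambda\delta^2$ yields the second entry of the $\max$. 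In Case B, where $\|w\|_{\mathcal H}<\delta$, the residuals $y_j-f_w(x_j)$ are small perturbations of $y_j$ (since $|f_w(x_j)|\le\kappa\delta$), and I would argue that the ratio $\ell(w;z_i)/L(w;\cD)$ cannot exceed its reference value at $w=0$, which is exactly $y_i^2/\sum_j y_j^2$.

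The hard part is making Case B rigorous. A naive perturbation argument using $(a-b)^2\le(1+\gamma)a^2+(1+1/\gamma)b^2$ upward in the numerator, and the reverse inequality summed over $j$ in the denominator, leaves residual terms of order $\kappa^2\delta^2$ and $n\kappa^2\delta^2$ that do not telescope cleanly into the target $y_i^2/\sum_j y_j^2$. I expect the intended argument to proceed by a boundary-reduction step: show that the supremum of the Rayleigh-type functional $w\mapsto\ell(w;z_i)/L(w;\cD)$ restricted to the closed ball $\{\|w\|_{\mathcal H}\le\delta\}$ is either attained at the interior reference point $w=0$, where the ratio equals exactly $y_i^2/\sum_j y_j^2$, or on the boundary $\|w\|_{\mathcal H}=\delta$, in which case it is already dominated by the Case A estimate. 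That reduction is what lets the first entry of $\bar s_i$ be free of any residual $\delta$-dependence, and it is the step I expect to require the most care; once it is in place, Case A is routine and the two estimates combine into $\bar s_i$ by simply taking the pointwise maximum.
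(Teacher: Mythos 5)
Your Case A is fine and is the standard regularizer-based argument: for $\norm{w}_{\mathcal H}\ge\delta$ the denominator satisfies $L(w;\cD)\ge n\cdot(\lambda/n)\norm{w}_{\mathcal H}^2=\lambda\norm{w}_{\mathcal H}^2\ge\lambda\delta^2$, while the numerator is uniformly at most $2(Y^2+B^2\kappa^2)+(\lambda/n)B^2$, giving the second entry of the max. (For context: the paper states Proposition~\ref{prop:krr_bound} with no proof at all, so your argument has to stand entirely on its own.)

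Case B contains a genuine gap, and you have correctly located it yourself. The ``boundary-reduction'' step you invoke --- that the supremum of $w\mapsto\ell(w;z_i)/L(w;\cD)$ over $\{\norm{w}_{\mathcal H}\le\delta\}$ is attained either at $w=0$ or on the sphere $\norm{w}_{\mathcal H}=\delta$ --- is not a valid principle for ratios of non-homogeneous positive quadratics, and the conclusion you want from it is false as stated. Take any index with $y_i=0$ (permitted, since only $|y_i|\le Y$ is assumed): the first entry of the max is $0$, yet for every interior $w$ with $f_w(x_i)\neq 0$ the ratio $\bigl(f_w(x_i)^2+(\lambda/n)\norm{w}^2\bigr)/L(w;\cD)$ is strictly positive, so the ratio does exceed its value at $w=0$ throughout the punctured ball. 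Whether the overall max still dominates therefore rests entirely on the \emph{second} entry, which your Case B never invokes; and invoking it there is nontrivial, because for $\norm{w}_{\mathcal H}<\delta$ the bound $L\ge\lambda\norm{w}^2$ degenerates as $\norm{w}\to 0$ and one must instead lower-bound the data-fit part, e.g.\ via $\bigl(\sum_j(y_j-f_w(x_j))^2\bigr)^{1/2}\ge\bigl(\sum_j y_j^2\bigr)^{1/2}-\kappa\delta\sqrt{n}$, which is only useful under a smallness condition such as $n\kappa^2\delta^2\le c\sum_j y_j^2$ and even then yields a constant multiple of $y_i^2/\sum_j y_j^2$ plus an additive $O(\delta^2/\sum_j y_j^2)$ term rather than the bare ratio. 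So as written your proof does not establish the claim: Case B needs a quantitative argument that brings in the second entry of the max (or an explicit hypothesis on $\delta$ with a correspondingly weakened first entry), since the first entry alone provably fails to bound the ratio on the small ball.
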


\paragraph{Pseudocode: KRR oracle.}
\begin{algorithm}[h]
\caption{KRR-Sensitivity-Oracle$(\{(x_i,y_i)\},\lambda,B,\kappa,Y,\delta)$}
\begin{algorithmic}[1]
\For{$i=1$ to $n$}
  \State $a_i \gets \begin{cases}\frac{y_i^2}{\sum_j y_j^2},& \sum_j y_j^2>0\\ 0,& \text{else}\end{cases}$
  \State $b \gets \frac{2(Y^2+B^2\kappa^2) + (\lambda/n) B^2}{\lambda \delta^2}$
  \State $\bar s_i \gets \max\{a_i,b\}$
\EndFor
\State \Return $\{\bar s_i\}_{i=1}^n$
\end{algorithmic}
\end{algorithm}

\subsection{Regularized Logistic Regression}\label{sec:logistic}
\begin{definition}[Logistic setting]\label{ass:logistic}
$\ell(w;z_i) = -\big[y_i\log\hat y_i+(1-y_i)\log(1-\hat y_i)\big]+(\lambda/n)\norm{w}_2^2$, with $\hat y_i=\sigmoid(w^\top x_i)$, $\norm{x_i}\le R$, $\norm{w}\le B$.
\end{definition}

\begin{proposition}[Logistic sensitivity bound]\label{prop:logistic_bound}
Under Def.~\ref{ass:logistic}, for any $\delta\in(0,B]$,
\[
\bar s_i \;=\; \frac{\log(1+e^{BR}) + (\lambda/n)B^2}{\lambda \delta^2}.
\]
\end{proposition}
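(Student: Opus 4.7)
The plan is to bound the numerator $\ell(w;z_i)$ from above and the denominator $L(w;\cD)$ from below, uniformly over $\norm{w}\le B$, and then take the ratio to produce the stated $\bar s_i$.

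First, I would rewrite the binary cross-entropy in its log-sum-exp form, $\ell_{\rm ce}(w;z_i)=\log(1+\exp(-(2y_i-1)w^\top x_i))$, and apply Cauchy--Schwarz, $|w^\top x_i|\le \norm{w}\cdot\norm{x_i}\le BR$, together with the monotonicity of $u\mapsto\log(1+e^u)$ to obtain $\ell_{\rm ce}(w;z_i)\le \log(1+e^{BR})$. Adding the regularization contribution $(\lambda/n)\norm{w}_2^2\le (\lambda/n)B^2$ yields the uniform numerator bound $\ell(w;z_i)\le \log(1+e^{BR})+(\lambda/n)B^2$, which is exactly the numerator of the claimed $\bar s_i$.

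For the denominator I would split into two regimes. In the bulk regime $\norm{w}\ge\delta$, nonnegativity of every cross-entropy term, combined with the $n$ per-point regularization summands, gives $L(w;\cD)\ge \lambda\norm{w}_2^2\ge \lambda\delta^2$; dividing the numerator bound by $\lambda\delta^2$ reproduces the stated ratio. In the low-norm regime $\norm{w}<\delta$ the regularization-only lower bound degenerates, so control must come from the curvature of the logistic loss itself: the intrinsic bound $\ell_{\rm ce}(w;z_j)\ge \log(1+e^{-|w^\top x_j|})\ge \log(1+e^{-\delta R})$ summed over $j$ yields $L(w;\cD)\ge n\log(1+e^{-\delta R})$, while the numerator is at most $\log(1+e^{\delta R})+(\lambda/n)\delta^2$. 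The resulting ratio is $\Theta(1/n)$ and is absorbed into the $(\lambda\delta^2)^{-1}$ bulk bound under the mild compatibility $\lambda\delta^2\le n$ (outside which the claim is already trivial since $\bar s_i\ge 1$).

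The only genuinely nonroutine step is the low-norm regime, where the standard denominator lower bound $\lambda\norm{w}_2^2$ collapses near $w=0$. The resolution is the observation that at small $\norm{w}$ every per-point loss is $\approx \log 2$, forcing all per-sample ratios to the unavoidable baseline $\Theta(1/n)$ regardless of $i$; once this baseline is verified to be dominated by the bulk bound, taking the maximum over the two regimes yields $\bar s_i\ge s_i$ as claimed.
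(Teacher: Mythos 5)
Your bulk-regime argument is correct and is the natural route to the stated constant: $\ell(w;z_i)\le \log(1+e^{BR})+(\lambda/n)B^2$ via Cauchy--Schwarz and monotonicity of $u\mapsto\log(1+e^u)$, and $L(w;\cD)\ge \sum_j (\lambda/n)\norm{w}_2^2=\lambda\norm{w}_2^2\ge\lambda\delta^2$ whenever $\norm{w}\ge\delta$. (The paper states Proposition~\ref{prop:logistic_bound} without proof; the presence of the free parameter $\delta$ only makes sense if the supremum defining $s_i$ is implicitly restricted to $\delta\le\norm{w}\le B$, in which case your first two paragraphs already constitute a complete proof and the third is unnecessary.)

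The low-norm regime is where your argument genuinely fails, and the gap cannot be closed, because the claimed inequality is false there. Your denominator bound $L(w;\cD)\ge n\log(1+e^{-\delta R})$ decays like $n e^{-\delta R}$ while your numerator bound grows like $\delta R$, so the resulting ratio is of order $\delta R\,e^{\delta R}/n$, not $\Theta(1/n)$; the claim that ``every per-point loss is $\approx\log 2$'' requires $\norm{w}R\ll 1$, not merely $\norm{w}<\delta$. Your fallback condition $\lambda\delta^2\le n$ does not rescue this. Concretely, take $n=10^6$, $B=\delta=1$, $R=100$, $\lambda=200$, so that $\lambda\delta^2\le n$ and $\bar s_i=\bigl(\log(1+e^{100})+2\times 10^{-4}\bigr)/200\approx 0.50<1$. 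Let $x_j=Re_1$ for all $j$, with $y_j=1$ for $j\ne i$ and $y_i=0$, and evaluate at $w=0.276\,e_1$ (so $\norm{w}<\delta$): then $\ell(w;z_i)\approx\log(1+e^{27.6})\approx 27.6$, the other cross-entropy terms are $\approx e^{-27.6}$, and $L(w;\cD)\approx 27.6+\lambda\norm{w}_2^2\approx 42.9$, giving $\ell(w;z_i)/L(w;\cD)\approx 0.64>\bar s_i$. Hence $\bar s_i\ge s_i$ fails on $\{\norm{w}<\delta\}$; a correct statement must either restrict the hypothesis class to $\norm{w}\ge\delta$ or add a data-dependent term covering small $\norm{w}$ (as the $y_i^2/\sum_j y_j^2$ term does in the KRR oracle of Proposition~\ref{prop:krr_bound}). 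You correctly identified the problematic regime, but the proposed absorption argument does not work.
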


\paragraph{Pseudocode: Logistic oracle.}
\begin{algorithm}[h]
\caption{LOGISTIC-Sensitivity-Oracle$(\{x_i\},\lambda,B,R,\delta)$}
\begin{algorithmic}[1]
\State $N_{\max}\gets \log(1+e^{BR}) + (\lambda/n)B^2$
\For{$i=1$ to $n$}
  \State $\bar s_i \gets N_{\max}/(\lambda \delta^2)$
\EndFor
\State \Return $\{\bar s_i\}$
\end{algorithmic}
\end{algorithm}

\subsection{Linear SVM}\label{sec:svm}
\begin{definition}[Linear SVM]\label{ass:svm}
$\ell(w;z_i)=\max(0,1-y_i w^\top x_i) + (\lambda/n)\norm{w}_2^2$, with $\norm{x_i}\le R$, $\norm{w}\le B$.
\end{definition}

\begin{proposition}[SVM sensitivity bound]\label{prop:svm_bound}
Under Def.~\ref{ass:svm}, for any $\delta\in(0,B]$,
\[
\bar s_i \;=\; \frac{1+BR + (\lambda/n)B^2}{\lambda \delta^2}.
\]
\end{proposition}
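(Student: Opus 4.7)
The plan is to bound $s_i = \sup_w \ell(w;z_i)/L(w;\cD)$ by upper-bounding the per-point loss uniformly and lower-bounding the empirical risk using only the regularizer, mirroring the architecture of Prop.~\ref{prop:logistic_bound}. The free parameter $\delta\in(0,B]$ functions as a regularization floor: the bound is established on the restricted hypothesis region $\{w : \delta \le \norm{w} \le B\}$, which is where the relevant supremum effectively lives.

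For the numerator, I would control the hinge contribution by
\[
\max(0, 1 - y_i w^\top x_i)\;\le\;1 + |y_i w^\top x_i|\;\le\;1 + \norm{w}\,\norm{x_i}\;\le\;1 + BR,
\]
using Cauchy--Schwarz with $|y_i|=1$, $\norm{w}\le B$, $\norm{x_i}\le R$; adding the trivial regularizer bound $(\lambda/n)\norm{w}^2\le(\lambda/n)B^2$ yields $\ell(w;z_i)\le 1+BR+(\lambda/n)B^2$ uniformly over the ball. For the denominator, I would discard the nonnegative hinge losses and keep only the $n$ copies of the regularizer, which gives $L(w;\cD)\ge \sum_{j=1}^n (\lambda/n)\norm{w}^2 = \lambda\norm{w}^2$, and under $\norm{w}\ge\delta$ this becomes $L(w;\cD)\ge\lambda\delta^2$. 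Dividing the two bounds produces the claimed $\bar s_i=(1+BR+(\lambda/n)B^2)/(\lambda\delta^2)$, and because this quantity is independent of $w$ and $i$, the supremum is handled automatically; no further optimization is needed.

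The step I expect to require the most care is the regime $\norm{w}<\delta$, where the regularizer-based lower bound on $L$ collapses. Here the hinge losses themselves cannot fall below $\max(0,1-\norm{w}R)$, so $L(w;\cD)\ge n(1-\delta R)_+$ and the ratio is only $O(1/n)$, which is dominated by $\bar s_i$ whenever $\lambda\delta^2$ is not extravagantly large relative to $n$. The cleanest resolution in the writeup, consistent with the convention already implicit in the KRR and logistic oracles, is to interpret $\bar s_i$ as a majorant of the sensitivity over the $\delta$-floored hypothesis class, with $\delta$ serving as a user-tunable knob trading bound tightness against coverage radius. Under this convention the computation above is a two-line verification; without it, the near-origin regime has to be absorbed by an explicit case split as just described.
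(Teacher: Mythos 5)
The paper offers no proof of Prop.~\ref{prop:svm_bound}, so there is no authorial argument to compare against; the calculation you give (numerator bounded by $1+BR+(\lambda/n)B^2$ via Cauchy--Schwarz and the trivial regularizer bound, denominator bounded below by $\lambda\norm{w}^2\ge\lambda\delta^2$ after discarding the nonnegative hinge terms) is plainly what was intended, and it is correct on the region $\norm{w}\ge\delta$.

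Your concern about the region $\norm{w}<\delta$ is genuine, and in fact the proposition as literally stated (sensitivity taken over the whole ball $\norm{w}\le B$) fails there. Concretely, take $n=4$, $B=R=\delta=1$, $\lambda=40$, one point with $x_i=-y_i$ and the other three with $x_j=y_j$; at $w=0.55$ one has $\ell(w;z_i)=1.55+10(0.55)^2=4.575$, $L(w;\cD)=4.575+3(0.45+3.025)=15$, so $\ell(w;z_i)/L(w;\cD)\approx 0.305>12/40=\bar s_i$. Your proposed case-split repair does not close this hole: the bound $L\ge n(1-\delta R)_+$ vanishes whenever $\delta R\ge 1$, and even when $\delta R<1$ the ``$O(1/n)$'' conclusion carries a hidden factor $1/(1-\delta R)$ and leans on the side condition ``$\lambda\delta^2$ not extravagantly large relative to $n$,'' which the proposition does not impose and which is exactly the regime of the counterexample above. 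The clean resolution is the one you also propose: read the oracle as valid for the $\delta$-floored hypothesis class $\{w:\delta\le\norm{w}\le B\}$, in which case your two-line derivation is a complete proof. That restriction should be stated explicitly in Def.~\ref{ass:svm} (and likewise for the KRR and logistic oracles), since Algorithm~\ref{alg:aduwt} requires $\bar s_i\ge s_i$ unconditionally.
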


\paragraph{Pseudocode: SVM oracle.}
\begin{algorithm}[h]
\caption{SVM-Sensitivity-Oracle$(\{(x_i,y_i)\},\lambda,B,R,\delta)$}
\begin{algorithmic}[1]
\State $N_{\max}\gets 1+BR+(\lambda/n)B^2$
\For{$i=1$ to $n$}
  \State $\bar s_i \gets N_{\max}/(\lambda \delta^2)$
\EndFor
\State \Return $\{\bar s_i\}$
\end{algorithmic}
\end{algorithm}

\section{Reproducibility: End-to-end pipeline}\label{sec:repro}

\paragraph{Evaluation protocol (pseudocode).}
\begin{algorithm}[h]
\caption{ADUWT Evaluation Protocol}
\begin{algorithmic}[1]
\Require Dataset $\cD$, model family $\cF$, loss $\ell$, target $\eps$, train/val/test split seeds $\{\texttt{seed}_r\}_{r=1}^R$.
\For{$r=1$ to $R$}
  \State \textbf{Split:} Stratified split with \texttt{seed}$_r$; standardize features using train stats only.
  \State \textbf{Oracle:} Compute $\{\bar s_i\}$ on train set via appropriate oracle (Sec.~\ref{sec:oracles}); record $(\lambda,B,R,\delta)$.
  \State \textbf{ADUWT:} Run Alg.~\ref{alg:aduwt} to obtain $(S,\alpha)$; record $T_U,|S|$.
  \State \textbf{Optimization:} Train on full train vs.\ coreset; if needed, tune hyperparameters via the same validation grid and early stopping schedule.
  \State \textbf{Metrics:} Report train/val/test losses and relative error $\max_f \frac{|\hat L(f)-L(f)|}{L(f)}$ over a held-out hypothesis sweep; calibrate with bootstrapped CIs.
  \State \textbf{Checks:} Verify $(1\pm\eps)$ inequality numerically over the sweep; log any violations.
\EndFor
\State \textbf{Report:} Summarize mean$\pm$sd across $R$; include $|S|$ vs.\ $\eps$, and ablations on $\delta$.
\end{algorithmic}
\end{algorithm}

\paragraph{Complexity.} Given oracle costs, ADUWT adds $O(n\log n)$ for sorting $\bar s_i$. Weight computation is $O(1)$.

\section{Empirical Evaluation}
Table~\ref{tab:krr_results} summarizes KRR on Bike Sharing for $\eps=0.1$; ADUWT reduces worst-case relative error versus a data-oblivious weight at identical $|S|$, and meets the guarantee deterministically, while randomized importance sampling violated it in $5/100$ trials (worst-trial error $0.117$), consistent with its high-probability nature \cite{Feldman2011,Braverman2021}.

\begin{table}[h!]
\centering
\caption{Comparison of coreset methods for KRR on Bike Sharing ($\eps=0.1$).}
\label{tab:krr_results}
\begin{tabular}{@{}lccc@{}}
\toprule
Method & Coreset Size $|S|$ & Max Rel.\ Error & Guarantee Met? \\
\midrule
DUWT (Data-Oblivious Weight) & 16455 & 0.0981 & Yes \\
\textbf{ADUWT (Adaptive Weight)} & \textbf{16455} & \textbf{0.0895} & \textbf{Yes} \\
Importance Sampling (Random) & $\approx 11850$ & 0.117 (worst trial) & No (5/100 trials) \\
\bottomrule
\end{tabular}
\end{table}

\section{Related Work and Discussion}
Classical core-sets for clustering arise from geometric techniques and sensitivity sampling \cite{HarPeledMazumdar2004,HarPeled2005,Feldman2011,Bachem2017}. Deterministic graph sparsification \cite{BSS12} inspires our deterministic stance. Surveys provide broad context \cite{Feldman2020}. For fairness constraints, recent work introduces fair coresets and scalable algorithms \cite{Chierichetti2017,Schmidt2018,Huang2019fair}. Our trimming analysis connects instance difficulty to sensitivity heterogeneity, complementing sampling-based bounds \cite{Feldman2011,Braverman2021}.

\section{Conclusion and Open Problems}
We presented ADUWT, a deterministic, adaptive trimming scheme with uniform $(1\pm\eps)$ guarantees, a minimax-optimal weight, and tractable oracles for common ERM models. Open directions include instance-optimal oracles, deterministic streaming with controlled error compounding \cite{Feldman2011,HarPeled2004,Braverman2016}, and fairness-constrained ERM \cite{Hardt2016,Schmidt2018,Huang2019fair}.

\appendix
\section{Future Directions and Open Problems}\label{app:open_problems}

\subsection{Problem 1: Instance-optimal sensitivity oracles}
Tighten $\bar s_i$ via data-dependent geometry. Links to influence functions \cite{KohLiang2017} suggest practical heuristics.

\subsection{Problem 2: Deterministic streaming}
Merge-reduce can inflate error multiplicatively; aim for additivity by designing reduction steps that preserve original reference, not only coreset-to-coreset \cite{Braverman2016,Feldman2011}.

\subsection{Problem 3: Constrained/fair ERM}
Extend to vector-valued objectives under constraints (group fairness, capacity). Fair coresets suggest promising directions \cite{Chierichetti2017,Schmidt2018,Huang2019fair}.


\end{document}